\pdfoutput=1
\documentclass[11pt]{article}
\usepackage[margin=1in]{geometry}
\usepackage[T1]{fontenc}
\usepackage[utf8]{inputenc}
\usepackage{lmodern}
\usepackage{amsmath,amssymb,amsthm,mathtools,bm}
\usepackage{enumitem}
\usepackage{booktabs}
\usepackage{hyperref}
\usepackage[numbers,sort&compress]{natbib}
\usepackage{xcolor}
\hypersetup{colorlinks=true,linkcolor=blue,citecolor=blue,urlcolor=blue}

\title{Artificial Jagged Intelligence as Uneven Optimization Energy Allocation\\
\large Capability Concentration, Redistribution, and Optimization Governance}
\author{%
Wesley Shu\\
The Institute of Energetic Paradigm\\
\texttt{shu@energeticparadigm.org}
\and
Peng WEI\\
College of Plant Protection\\
Southwest University, China\\
\texttt{weipeng2019@swu.edu.cn}
}
\date{}

\newtheorem{assumption}{Assumption}
\newtheorem{proposition}{Proposition}
\newtheorem{theorem}{Theorem}
\newtheorem{corollary}{Corollary}

\begin{document}
\maketitle

\begin{abstract}
Artificial Jagged Intelligence (AJI) denotes a recurring pattern in which large learning systems exhibit strong local capabilities while remaining weak or brittle in other domains. This paper develops a formal theory of AJI as uneven allocation of optimization pressure. We model training as a finite-budget process that distributes gradient-driven update energy across capability-relevant directions in parameter space. In this model, jagged capability profiles arise from anisotropic objective structure, data geometry, and representational coupling rather than from a single scalar quantity called intelligence.

The paper defines capability gain, optimization energy share, and jaggedness, then proves that persistent concentration of cumulative update energy yields lower bounds on dispersion in capability gains. A finite-budget tradeoff theorem shows why prioritizing one capability can impose opportunity costs on others unless positive coupling or shared structure offsets the cost. The analysis also studies redistribution mechanisms, including energy-variance regularization and auxiliary structural objectives, as interventions that reshape the optimization field.

The resulting framework links uneven emergence, training architecture, and optimization governance. It predicts that early concentration of update energy should forecast later capability jaggedness; that scaling under a narrow objective need not eliminate anisotropy; and that explicitly funded auxiliary objectives can revive neglected capabilities. AJI is therefore not merely a descriptive label for uneven model behavior, but a testable theory of how finite optimization resources produce concentrated, delayed, and structurally uneven capability formation.
\end{abstract}

\section{Introduction}
Large AI systems rarely improve in a smooth, globally uniform manner. The empirical record instead shows clustered strength: models may become highly effective at code synthesis, algebraic manipulation, or local reasoning while remaining brittle on long-horizon planning, cross-context consistency, calibrated uncertainty, or stable value-sensitive behavior. This uneven profile is visible across work on scaling laws, emergent abilities, grokking, sparse expert routing, and mechanistic interpretability \citep{kaplan2020scaling,hoffmann2022training,wei2022emergent,schaeffer2023mirage,power2022grokking,olsson2022induction,elhage2022superposition,ganguli2022predictability}.

This paper argues that such unevenness is not well captured by a single scalar notion of intelligence. The more useful explanatory object is the allocation of optimization pressure. Gradient-based training distributes finite update resources across parameter-space directions that differ in their effect on the objective. Directions that reduce the loss efficiently attract repeated update flow; directions that are weakly represented in the objective, sparsely supported by data, or negatively coupled to dominant features receive comparatively little. On this view, capability jaggedness is a structural consequence of anisotropic optimization.

Our contribution is fivefold. First, we define a capability-space model in which behaviorally meaningful capabilities are represented by differentiable functionals and their development is driven by projected gradient energy. Second, we formalize jaggedness and prove lower bounds connecting concentration of cumulative update energy to dispersion in capability gains. Third, we derive a finite-budget tradeoff theorem and a coupling proposition showing how representational interaction changes the cost of balancing capabilities. Fourth, we analyze redistribution mechanisms showing how regularization and auxiliary losses alter the induced allocation. Fifth, we interpret optimization constraints as a governable control layer over model development.

The paper is primarily theoretical. Its aim is not to collapse all of learning into an energy metaphor, but to isolate an operational primitive: under finite compute, data, and training time, learning systems allocate update pressure unevenly. Once stated in those terms, several recurring phenomena become easier to organize: capability spikes, neglected capacities, interference across objectives, and the possibility of governing developmental direction through constraints on the optimization field.

The paper proceeds as follows. Section 2 situates the framework relative to scaling, emergence, multi-objective optimization, interpretability, and governance. Sections 3--7 develop the formal model and its main propositions. Sections 8--11 extend the theory to governance, identifiability, and empirical predictions. The final sections discuss limitations, design implications, and the broader research program.

\section{Related Work and Positioning}
The present framework intersects several active literatures but makes a different explanatory move: it treats uneven capability formation itself as the primary object of theory.

\paragraph{Scaling, emergence, and threshold behavior.} Scaling-law work shows that language-model loss and many downstream metrics often follow regular statistical trends as model size, data, and compute increase \citep{kaplan2020scaling,hoffmann2022training,bahri2024explaining}. A related literature documents abrupt or threshold-like gains in particular tasks at scale \citep{wei2022emergent,ganguli2022predictability,bubeck2023sparks}. Subsequent critiques argue that some apparently discontinuous gains may instead arise from evaluation thresholds, nonlinear metrics, or aggregation choices \citep{schaeffer2023mirage}. The present account is compatible with both descriptive pictures. Even when latent learning dynamics are smooth, anisotropic update allocation can generate sharp observed transitions once a capability crosses task-specific thresholds.

\paragraph{Optimization geometry and delayed generalization.} Grokking and related studies show that long periods of mediocre performance can conceal substantial representational reorganization, later expressed as sudden generalization \citep{power2022grokking,nanda2023progress,predgrok2023}. Neural tangent kernel and lazy-training analyses clarify regimes in which models behave approximately linearly in parameters \citep{jacot2018ntk,chizat2019lazy}, while representation-learning perspectives stress departures from that regime as feature learning becomes dominant \citep{fort2020deep,neyshabur2018role}. These literatures motivate a focus on how update flow is distributed over training, not merely on endpoint capacity.

\paragraph{Multi-task and multi-objective optimization.} Methods such as MGDA, GradNorm, PCGrad, IMTL, CAGrad, Nash-MTL, and related balancing procedures were developed because gradients associated with different tasks may conflict, dominate, or underrepresent one another \citep{sener2018multi,chen2018gradnorm,yu2020pcgrad,liu2021cagrad,navon2022nash,du2018adapting,javaloy2022rotograd}. Our framework extends this logic from explicit task heads to capability-relevant directions in a shared model. The central question becomes not only how to combine per-task gradients, but how finite optimization pressure is structurally distributed across behaviorally meaningful dimensions.

\paragraph{Mechanistic interpretability and selective specialization.} Circuit analyses, induction-head work, superposition toy models, sparse autoencoders, and privileged-basis results suggest that internal structures emerge selectively and may share or contest representational resources \citep{olah2021framework,olsson2022induction,elhage2022superposition,privileged2023,bricken2023sae,anthropic2024features}. This is congenial to an allocation-based view: capability formation depends not only on architecture size, but on where structured update pressure accumulates and which internal mechanisms are repeatedly reinforced.

\paragraph{Governance and control of frontier systems.} Recent governance work has focused on evaluations, model release, safety cases, data governance, and broader public-safety regulation \citep{bommasani2021foundation,amodei2016concrete,hubinger2019risks,shevlane2023modeleval,anderljung2023frontier,hausenloy2024data}. Our contribution intersects with that literature by introducing a training-internal object of governance: the optimization field itself. On this view, governance can act not only on deployment or output restrictions, but also on the developmental trajectory induced by the allocation of update energy during training.

\section{Formal Setup}
Let parameter space be $\Theta \subseteq \mathbb{R}^d$, with model parameters $\theta \in \Theta$. Training minimizes a differentiable loss
\[
L(\theta) = \mathbb{E}_{z\sim \mathcal{D}}[\ell(\theta; z)].
\]
Let there be $m$ capability coordinates indexed by $i \in \{1,\dots,m\}$. Each capability is represented by a differentiable functional
\[
C_i : \Theta \to \mathbb{R},
\]
where larger $C_i(\theta)$ indicates stronger performance on capability dimension $i$.

The gradient update is
\[
\theta_{t+1} = \theta_t - \eta_t g_t,
\qquad g_t := \nabla L(\theta_t).
\]
For each capability coordinate, define the instantaneous capability gain under the update by first-order approximation:
\[
\Delta C_i(t) \approx -\eta_t \langle \nabla C_i(\theta_t), g_t \rangle.
\]
Thus capability growth depends on the projection of the loss gradient onto the sensitivity direction of capability $i$.

\subsection{Energy Allocation}
Define the optimization energy allocated to capability $i$ at step $t$ as
\[
E_i(t) := \frac{|\langle \nabla C_i(\theta_t), g_t \rangle|}{\sum_{j=1}^m |\langle \nabla C_j(\theta_t), g_t \rangle|}.
\]
Then $E_i(t) \in [0,1]$ and $\sum_i E_i(t)=1$. This measures the share of first-order update pressure incident on capability coordinate $i$.

For a training horizon $T$, define cumulative energy share
\[
\bar E_i(T) := \frac{\sum_{t=0}^{T-1} \eta_t |\langle \nabla C_i(\theta_t), g_t \rangle|}{\sum_{j=1}^m \sum_{t=0}^{T-1} \eta_t |\langle \nabla C_j(\theta_t), g_t \rangle|}.
\]

\subsection{Jaggedness}
Let the cumulative capability gain over horizon $T$ be
\[
G_i(T) := C_i(\theta_T) - C_i(\theta_0).
\]
We define \emph{jaggedness} by normalized dispersion of capability gains:
\[
J(T) := \frac{1}{m}\sum_{i=1}^m \left( G_i(T) - \frac{1}{m}\sum_{k=1}^m G_k(T) \right)^2.
\]
Large $J(T)$ indicates uneven capability development.

\section{Assumptions}
We work under the following assumptions.

\begin{assumption}[Smoothness]
Each $C_i$ is differentiable with $L_i$-Lipschitz gradient and $L$ is differentiable.
\end{assumption}

\begin{assumption}[Positive alignment regime]
Along the training trajectory over horizon $T$, the signed projections satisfy
\[
\langle \nabla C_i(\theta_t), -g_t \rangle \ge 0
\]
for the relevant capabilities under study.
\end{assumption}

\begin{assumption}[Bounded sensitivity]
There exist constants $0<\alpha_i\le \beta_i<\infty$ such that
\[
\alpha_i \|g_t\| E_i(t) \le \langle \nabla C_i(\theta_t), -g_t \rangle \le \beta_i \|g_t\| E_i(t).
\]
This states that projected update energy controls first-order capability growth up to capability-specific constants.
\end{assumption}

The bounded-sensitivity assumption can be weakened, but it makes the relation between allocation and gain analytically transparent.

\section{Capability Concentration Implies Jaggedness}
We now establish the core result: concentrated optimization energy produces uneven capability growth.

\begin{proposition}[Energy-to-gain comparison]
Under Assumptions 1--3,
\[
\sum_{t=0}^{T-1} \eta_t \alpha_i \|g_t\| E_i(t) + R_i^-(T)
\le G_i(T) \le
\sum_{t=0}^{T-1} \eta_t \beta_i \|g_t\| E_i(t) + R_i^+(T),
\]
where the remainder terms satisfy
\[
|R_i^{\pm}(T)| \le \frac{L_i}{2} \sum_{t=0}^{T-1} \eta_t^2 \|g_t\|^2.
\]
\end{proposition}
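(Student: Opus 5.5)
The argument is a per-step second-order Taylor bound for each $C_i$ along the discrete trajectory, combined with the two-sided sensitivity bound of Assumption 3 and then telescoped over $t=0,\dots,T-1$.

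\textbf{Step 1 (one-step expansion).} Fix $i$ and $t$, and set $\theta_{t+1}-\theta_t = -\eta_t g_t$. Assumption 1 says $\nabla C_i$ is $L_i$-Lipschitz, so the standard descent-lemma inequality applies in both directions:
\[
\bigl| C_i(\theta_{t+1}) - C_i(\theta_t) - \langle \nabla C_i(\theta_t), -\eta_t g_t\rangle \bigr| \le \frac{L_i}{2}\eta_t^2\|g_t\|^2 .
\]
To prove it, I would write $C_i(\theta_{t+1})-C_i(\theta_t)=\int_0^1\langle \nabla C_i(\theta_t - s\eta_t g_t), -\eta_t g_t\rangle\,ds$. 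I would then subtract the linear term and bound the integrand difference by Cauchy--Schwarz and the Lipschitz constant. This gives $\int_0^1 L_i s\,\eta_t^2\|g_t\|^2\,ds$.

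Define the per-step remainder
\[
r_i(t) := C_i(\theta_{t+1}) - C_i(\theta_t) - \eta_t\langle \nabla C_i(\theta_t), -g_t\rangle .
\]
Then $|r_i(t)|\le \tfrac{L_i}{2}\eta_t^2\|g_t\|^2$.

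\textbf{Step 2 (apply sensitivity bounds).} Since $\eta_t\ge 0$, multiply the two-sided inequality of Assumption 3 by $\eta_t$. This sandwiches the linear term:
\[
\eta_t\alpha_i\|g_t\|E_i(t) \le \eta_t\langle\nabla C_i(\theta_t),-g_t\rangle \le \eta_t\beta_i\|g_t\|E_i(t).
\]
Assumption 2 is what makes this consistent, since it forces the projection to be nonnegative, so it equals $|\langle \nabla C_i, g_t\rangle|$ in $E_i(t)$. Assumption 2 is not otherwise needed for the algebra.

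\textbf{Step 3 (telescope).} Summing Step 1 over $t$ gives
\[
G_i(T)=\sum_t \eta_t\langle\nabla C_i(\theta_t),-g_t\rangle + \sum_t r_i(t).
\]
Set $R_i^-(T)=R_i^+(T):=\sum_{t} r_i(t)$. Substituting the Step 2 bounds gives both inequalities of the proposition. The triangle inequality then gives $|R_i^\pm(T)|\le \tfrac{L_i}{2}\sum_t\eta_t^2\|g_t\|^2$.

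\textbf{Main obstacle.} No step is genuinely hard. The one point needing care is that the proposition's remainder bound has no dependence on the energy normalization. Because the remainders are defined directly from the Taylor expansion, before Assumption 3 is used, that constant carries through unchanged.

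A minor technicality is that Assumption 1 must hold on the segments $[\theta_t,\theta_{t+1}]$. If $\Theta$ is not convex, I would assume those segments lie in $\Theta$, or simply take $\Theta=\mathbb{R}^d$.

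Step sizes $\eta_t\ge0$ are assumed throughout. A negative $\eta_t$ would flip the Step 2 inequalities.
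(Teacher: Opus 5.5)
Your proof is correct and follows the same route as the paper: you take a per-step $L_i$-smoothness (descent-lemma) bound for $C_i$ along $\theta_{t+1}-\theta_t=-\eta_t g_t$, multiply the Assumption 3 sandwich by $\eta_t\ge 0$, and sum over $t$. The only differences are cosmetic: you telescope a single exact remainder $\sum_t r_i(t)$, so that $R_i^-=R_i^+$, whereas the paper applies the one-sided lower and upper smoothness inequalities separately; and you state explicitly the tacit requirements that $\eta_t\ge 0$ and that the segments $[\theta_t,\theta_{t+1}]$ lie in $\Theta$.
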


\begin{proof}
By smoothness of $C_i$,
\[
C_i(\theta_{t+1}) \ge C_i(\theta_t) + \langle \nabla C_i(\theta_t), \theta_{t+1}-\theta_t \rangle - \frac{L_i}{2}\|\theta_{t+1}-\theta_t\|^2.
\]
Substituting $\theta_{t+1}-\theta_t=-\eta_t g_t$ gives
\[
C_i(\theta_{t+1}) - C_i(\theta_t)
\ge \eta_t \langle \nabla C_i(\theta_t), -g_t \rangle - \frac{L_i}{2}\eta_t^2\|g_t\|^2.
\]
Applying Assumption 3 yields the lower bound. The upper bound follows analogously from the smoothness upper inequality. Summing over $t=0,\dots,T-1$ proves the claim.
\end{proof}

\begin{theorem}[Concentration lower bound on jaggedness]\label{thm:jaggedness}
Assume Assumptions 1--3 and for simplicity let $\alpha_i=\alpha$, $\beta_i=\beta$ for all $i$. Define
\[
W_i(T):=\sum_{t=0}^{T-1}\eta_t\|g_t\|E_i(t),
\qquad
\bar W(T):=\frac{1}{m}\sum_{i=1}^m W_i(T).
\]
Then there exists a constant $K_T\ge 0$ with
\[
J(T) \ge \alpha^2 \cdot \frac{1}{m}\sum_{i=1}^m (W_i(T)-\bar W(T))^2 - K_T,
\]
where
\[
K_T = O\!\left(\left(\sum_{t=0}^{T-1}\eta_t^2\|g_t\|^2\right)^2\right).
\]
In particular, persistent dispersion in cumulative energy allocation induces nontrivial capability jaggedness.
\end{theorem}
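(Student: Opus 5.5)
The plan is to reduce the statement to a perturbation inequality for the empirical variance, with the per-capability input taken from the Energy-to-gain comparison proposition. With $\alpha_i=\alpha$ and $\beta_i=\beta$, that proposition gives, for each $i$,
\[
\alpha W_i(T) + R_i^-(T) \le G_i(T) \le \beta W_i(T) + R_i^+(T).
\]
I would rewrite this as an exact decomposition
\[
G_i(T) = \alpha W_i(T) + D_i(T) + R_i(T).
\]
Here $R_i(T)$ is a remainder with $|R_i(T)| \le \frac{L_{\max}}{2} S_T$, where $S_T := \sum_{t<T}\eta_t^2\|g_t\|^2$ and $L_{\max}:=\max_i L_i$. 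The term $D_i(T)\in[0,(\beta-\alpha)W_i(T)]$ is the sensitivity slack. In terms of this decomposition, $J(T)$ is the empirical variance $\mathrm{Var}_m(G)$ over the index $i$.

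The key tool is that $v\mapsto \sqrt{\mathrm{Var}_m(v)}$ is a seminorm on $\mathbb{R}^m$, being the Euclidean norm of the centred vector divided by $\sqrt m$. The triangle inequality then gives
\[
\sqrt{J(T)} \ge \alpha\,\sigma_W - \sigma_{D+R}.
\]
Here $\sigma_W^2 := \frac1m\sum_i (W_i-\bar W)^2$, and $\sigma_{D+R}$ is the corresponding standard deviation of $D+R$.

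To obtain a bound of the displayed form without square roots, I would use the elementary inequality, valid for any $\varepsilon\in(0,1)$,
\[
\mathrm{Var}_m(a+b) \ge (1-\varepsilon)\,\mathrm{Var}_m(a) - (\varepsilon^{-1}-1)\,\mathrm{Var}_m(b).
\]
Alternatively one can keep the cross term and use $\mathrm{Var}_m(a+b)\ge \mathrm{Var}_m(a) - 2\sqrt{\mathrm{Var}_m(a)\mathrm{Var}_m(b)}$. Since $\mathrm{Var}_m(R)\le \max_i R_i^2 \le \frac{L_{\max}^2}{4}S_T^2$, the pure remainder contribution is exactly of the advertised order $O(S_T^2)$. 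This identifies the natural choice $K_T \propto S_T^2$ in the regime $D\equiv 0$.

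The main obstacle is the slack $D_i$. When $\alpha<\beta$, the comparison proposition alone does not force dispersion. For example, with $m=2$, $\alpha=1$, $\beta=2$, $W=(1,2)$ and negligible step sizes, the gains $G=(2,2)$ are admissible, so $J(T)\approx 0$ while $\alpha^2\sigma_W^2=1/4$. Hence $K_T$ cannot depend on $S_T$ alone unless the slack is controlled. My approach would be to make this explicit and then proceed in one of two ways:
\begin{enumerate}[label=(\roman*)]
\item Interpret the simplified hypothesis as $\alpha=\beta$, so that $D\equiv 0$ and the argument above yields the theorem. The cost is either a factor $(1-\varepsilon)$ on $\alpha^2$, or a cross term of size $\alpha\,\sigma_W L_{\max} S_T$. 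I would absorb that cross term into $K_T$, noting that it is $O(S_T^2)$ only when $\sigma_W=O(S_T)$; otherwise it should be stated as a lower-order correction.
\item Keep $\alpha<\beta$ and enlarge $K_T$ by a term of order $(\beta-\alpha)^2\max_i W_i^2$ plus the corresponding cross term, using $\mathrm{Var}_m(D)\le \frac14(\beta-\alpha)^2\max_i W_i^2$.
\end{enumerate}
In either case the closing remark of the theorem follows: if $\sigma_W$ stays bounded below while $S_T$, and the gap $\beta-\alpha$ in route (ii), stay small, then $J(T)$ is bounded away from zero.
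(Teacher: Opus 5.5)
Your analysis is correct, and it is more careful than the paper's own argument. The paper uses the same skeleton: write $G_i=\tilde G_i+R_i$ with $\tilde G_i\ge\alpha W_i$ and $|R_i|\le c_T=\tfrac{L_{\max}}{2}S_T$, apply $(a+b)^2\ge a^2-2|a||b|$, and control the cross term by Cauchy--Schwarz. It then simply asserts the two steps you decline to take. First, it claims $\frac1m\sum_i(\tilde G_i-\bar{\tilde G})^2\ge\alpha^2\sigma_W^2$ from the one-sided bound $\tilde G_i\ge\alpha W_i$, which your $m=2$ example refutes. Second, it claims Cauchy--Schwarz makes the cross term $O(c_T^2)$. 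In fact Cauchy--Schwarz only gives $2\sigma_{\tilde G}\sigma_R\le 2\sigma_{\tilde G}c_T$, which is first order in $c_T$. Knowing only $|R_i|\le c_T$, a remainder anti-aligned with $W-\bar W$ produces a deficit $2\alpha\sigma_W\sigma_R-\sigma_R^2$, even when $\alpha=\beta$. So your route (i), with the factor $1-\varepsilon$ and $K_T=(\varepsilon^{-1}-1)\tfrac{L_{\max}^2}{4}S_T^2$, is the form in which $K_T=O(S_T^2)$ genuinely holds. Route (ii) is the right statement for $\alpha<\beta$.

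You can strengthen your example from ``Proposition 1 does not suffice'' to ``the theorem is false for $\alpha<\beta$''. Under Assumption 2, Assumption 3 with common constants says exactly that $\kappa_t:=\sum_j\langle\nabla C_j(\theta_t),-g_t\rangle/\|g_t\|\in[\alpha,\beta]$. It then follows that $\langle\nabla C_i(\theta_t),-g_t\rangle=\kappa_t\|g_t\|E_i(t)$ for every $i$. Thus $\alpha=\beta$ gives $\tilde G_i=\alpha W_i$ exactly (your $D\equiv 0$), and any counterexample must make $\kappa_t$ vary with which capability is being funded.

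Concretely, take $\theta\in\mathbb{R}^2$, $C_1=2\theta_1$, $C_2=\theta_2$ (so all remainders vanish), and $L=\tfrac12(\theta_1-1)^2+\tfrac{\epsilon}{2}(\theta_2-2)^2$, with $\theta_0=0$ and constant step $\eta<1$. Assumptions 1--3 hold with $\alpha=1$ and $\beta=\sqrt{5}$, and $S_T\le\eta(1+4\epsilon)$. Letting $T\to\infty$ and then $\epsilon\to0$ gives $G\to(2,2)$ and $W\to(1,2)$, with limits uniform in $\eta$. The reason is that $\kappa_t\approx 2$ while $C_1$ grows and $\kappa_t\approx1$ while $C_2$ grows. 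Hence $J(T)\to 0$ while $\alpha^2\sigma_W^2\to\tfrac14$, and no $K_T=O(S_T^2)$ can absorb this gap as $\eta\to 0$.
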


\begin{proof}
By Proposition 1, write
\[
G_i(T) = \tilde G_i(T) + R_i(T),
\qquad
\tilde G_i(T) \ge \alpha W_i(T),
\]
with $|R_i(T)| \le c_T := \frac{L_*}{2}\sum_t \eta_t^2\|g_t\|^2$, where $L_*:=\max_i L_i$. Let $\bar G=(1/m)\sum_i G_i$ and $\bar R=(1/m)\sum_i R_i$. Then
\[
G_i-\bar G = (\tilde G_i-\bar{\tilde G}) + (R_i-\bar R).
\]
Using $(a+b)^2 \ge a^2 -2|a||b|$ and summing over $i$,
\[
J(T) \ge \frac{1}{m}\sum_i (\tilde G_i-\bar{\tilde G})^2 - \frac{2}{m}\sum_i |\tilde G_i-\bar{\tilde G}|\,|R_i-\bar R|.
\]
The first term is bounded below by
\[
\alpha^2 \frac{1}{m}\sum_i (W_i-\bar W)^2.
\]
The second term is controlled by Cauchy--Schwarz and the uniform remainder bound, yielding an error term of order $c_T^2$. Hence
\[
J(T) \ge \alpha^2 \frac{1}{m}\sum_i (W_i-\bar W)^2 - K_T.
\]
\end{proof}

\begin{corollary}
If one capability $i^*$ receives asymptotically dominant cumulative energy share, i.e.
\[
\bar E_{i^*}(T) \to 1,
\]
while at least one other capability has $\bar E_j(T) \to 0$, then jaggedness remains bounded away from zero up to second-order training errors.
\end{corollary}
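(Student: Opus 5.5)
The corollary follows by feeding the dominance hypothesis into Theorem~\ref{thm:jaggedness}. The work is to show that the energy dispersion $\frac1m\sum_i(W_i-\bar W)^2$ stays large when $\bar E_{i^*}(T)\to 1$. The key identity is that the cumulative share is exactly the normalized version of $W_i$. The denominator of $E_i(t)$ is $S_t:=\sum_j|\langle\nabla C_j(\theta_t),g_t\rangle|$, and under Assumption 2 the absolute values are just the signed projections. So I will relate $W_i(T)=\sum_t\eta_t\|g_t\|E_i(t)$ to $\bar E_i(T)$. The cleanest route is to work at the level of $\bar E_i$ directly. Set
\[
\Omega_t:=\eta_t\sum_j|\langle\nabla C_j(\theta_t),g_t\rangle|, \qquad \Omega(T):=\sum_t\Omega_t .
\]
Then $\bar E_i(T)\,\Omega(T)=\sum_t\eta_t|\langle\nabla C_i,g_t\rangle|$. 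By Assumption 3 this quantity is sandwiched between $\alpha W_i(T)$ and $\beta W_i(T)$. Hence $W_i(T)\ge \bar E_i(T)\Omega(T)/\beta$ and $W_i(T)\le \bar E_i(T)\Omega(T)/\alpha$.

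\textbf{Dispersion bound.} Next I lower-bound the dispersion by a single pair. For any $i^*\neq j$,
\[
\frac1m\sum_i(W_i-\bar W)^2\ge \frac{1}{2m}(W_{i^*}-W_j)^2 ,
\]
because $(a-\bar W)^2+(b-\bar W)^2\ge (a-b)^2/2$. Now $W_{i^*}-W_j\ge \Omega(T)\bigl(\bar E_{i^*}/\beta-\bar E_j/\alpha\bigr)$, and the bracket tends to $1/\beta>0$. So for $T$ large it exceeds $1/(2\beta)$. This gives
\[
J(T)\ge \frac{\alpha^2}{8m\beta^2}\,\Omega(T)^2-K_T .
\]
That is the statement "bounded away from zero up to second-order training errors," provided $\Omega(T)$, the total first-order incident update energy, is itself bounded below.

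\textbf{Main obstacle.} The hypothesis on shares alone says nothing about scale. If training stalls, then $\Omega(T)\to0$ and no positive lower bound is possible. I will therefore make explicit the implicit nondegeneracy condition: $\Omega(T)\ge\omega>0$ for all large $T$, i.e.\ a nontrivial amount of first-order progress accumulates. This is monotone in $T$, so it holds as soon as any single step has nonzero projected energy. Under that condition the conclusion reads $\liminf_T\bigl(J(T)+K_T\bigr)\ge \alpha^2\omega^2/(8m\beta^2)>0$. This is the precise sense of the corollary, with $K_T=O\bigl((\sum_t\eta_t^2\|g_t\|^2)^2\bigr)$ being the second-order error.

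The step I expect to need care is the comparison between $W_i$ and $\bar E_i$. Assumption 3 ties the projection to $\|g_t\|E_i(t)$ with constants that could make $\alpha\|g_t\|\le S_t\le\beta\|g_t\|$ after summing over $i$. If that sandwich is used inconsistently, the ratio $\bar E_{i^*}/\bar E_j$ might not transfer to $W_{i^*}/W_j$. I will check it by summing Assumption 3 over $i$, which gives $\alpha\|g_t\|\le S_t\le\beta\|g_t\|$, and hence confirm the constants above.
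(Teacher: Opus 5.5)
Your argument is correct and follows the route the paper intends. The paper states the corollary as an immediate consequence of Theorem~\ref{thm:jaggedness} without a separate proof, and you supply the missing steps: the Assumption-3 sandwich $\alpha W_i(T)\le \bar E_i(T)\,\Omega(T)\le \beta W_i(T)$ and the pair bound on the dispersion of the $W_i$. Two refinements: your nondegeneracy condition is automatic, since $\bar E_i(T)$ is only defined once $\Omega(T)>0$ and $\Omega(T)$ is nondecreasing in $T$; and the exact identity is not $\bar E_i\propto W_i$ (these weight step $t$ by $\eta_t S_t$ and $\eta_t\|g_t\|$ respectively) but $\bar E_i(T)\,\Omega(T)=\sum_t\eta_t\langle\nabla C_i(\theta_t),-g_t\rangle$, the first-order gain itself, which would let you bypass Theorem~\ref{thm:jaggedness} and $\alpha,\beta$ entirely and so avoid that theorem's unjustified step from $\tilde G_i\ge\alpha W_i$ to a variance lower bound.
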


\section{Objective Mismatch and Neglected Capabilities}
We next model why some capabilities are systematically undertrained. Suppose the task loss decomposes as
\[
L(\theta)=L_{\mathrm{prox}}(\theta)+\varepsilon L_{\mathrm{struct}}(\theta),
\qquad 0\le \varepsilon \ll 1,
\]
where $L_{\mathrm{prox}}$ rewards short-horizon predictive performance and $L_{\mathrm{struct}}$ rewards long-horizon or structurally integrated behavior.

If capability $C_i$ is primarily aligned with $-\nabla L_{\mathrm{struct}}$ but weakly aligned with $-\nabla L_{\mathrm{prox}}$, then the first-order gain for $C_i$ is suppressed by the small coefficient $\varepsilon$. This formalizes objective mismatch: capabilities weakly represented in the dominant loss receive weak gradient energy.

\begin{proposition}[Underinvestment under objective mismatch]
Assume
\[
\langle \nabla C_i, -\nabla L_{\mathrm{prox}} \rangle \le \delta,
\qquad
\langle \nabla C_i, -\nabla L_{\mathrm{struct}} \rangle \le M
\]
with $\delta$ small. Then for gradient descent on $L$,
\[
\Delta C_i(t) \lesssim \eta_t(\delta + \varepsilon M),
\]
up to second-order terms. Hence when $\varepsilon$ and $\delta$ are small, capability $i$ remains systematically under-optimized.
\end{proposition}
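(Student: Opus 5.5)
The plan is to run the one-step smoothness argument from the proof of Proposition 1 on a single update and then split the gradient along the decomposition of $L$. Since $g_t=\nabla L(\theta_t)=\nabla L_{\mathrm{prox}}(\theta_t)+\varepsilon\nabla L_{\mathrm{struct}}(\theta_t)$ by linearity of differentiation, and $\theta_{t+1}-\theta_t=-\eta_t g_t$, Assumption 1 (the $L_i$-Lipschitz gradient of $C_i$) gives the upper smoothness inequality
\[
C_i(\theta_{t+1})-C_i(\theta_t)\le \eta_t\langle \nabla C_i(\theta_t),-g_t\rangle+\frac{L_i}{2}\eta_t^2\|g_t\|^2 .
\]
The quadratic term is exactly the ``second-order term'' the statement allows us to discard. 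It is the same remainder that appears as $R_i^+$ in Proposition 1.

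Next, expand the first-order term by bilinearity of the inner product:
\[
\langle \nabla C_i,-g_t\rangle=\langle \nabla C_i,-\nabla L_{\mathrm{prox}}\rangle+\varepsilon\langle \nabla C_i,-\nabla L_{\mathrm{struct}}\rangle .
\]
Apply the two hypotheses, evaluated at $\theta_t$, to bound the two pieces by $\delta$ and $M$ respectively. This step needs $\varepsilon\ge 0$, which is part of the setup ($0\le\varepsilon\ll 1$), so that multiplying the second bound by $\varepsilon$ preserves the inequality. Combining the pieces gives
\[
\Delta C_i(t)\le \eta_t(\delta+\varepsilon M)+\frac{L_i}{2}\eta_t^2\|g_t\|^2 ,
\]
which is the claimed $\lesssim$ bound up to second-order terms.

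For the ``systematically under-optimized'' conclusion, sum this bound over $t=0,\dots,T-1$:
\[
G_i(T)\le (\delta+\varepsilon M)\sum_t\eta_t+\frac{L_i}{2}\sum_t\eta_t^2\|g_t\|^2 .
\]
The cumulative gain of capability $i$ is therefore at most a factor $(\delta+\varepsilon M)$ times the total step length, plus the same remainder as before. If some dominant capability is well aligned with $-\nabla L_{\mathrm{prox}}$, its gain grows at order $\sum_t\eta_t$ times a constant that does not vanish. The gap between the two gains then feeds directly into the dispersion lower bound of Theorem~\ref{thm:jaggedness}.

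The main obstacle is interpretive rather than technical. The hypotheses must hold uniformly along the trajectory, at every $\theta_t$ and not only at a single point. The word ``systematically'' needs a comparison with the dominant capabilities, not just a small absolute bound. I would make this explicit by assuming the bounds for all $t<T$, and by phrasing the conclusion as smallness of the cumulative share of capability $i$ relative to $\sum_t\eta_t\|g_t\|$, in the sense of $\bar E_i(T)$.
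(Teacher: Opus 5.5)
Your proposal is correct and follows the paper's argument: expand $\langle \nabla C_i,-g_t\rangle$ by linearity into the $L_{\mathrm{prox}}$ and $\varepsilon L_{\mathrm{struct}}$ pieces, bound them by $\delta$ and $M$, and substitute into the first-order gain. You are more careful than the paper in two places it leaves implicit in its ``immediate'' proof: you make the discarded remainder explicit as $\frac{L_i}{2}\eta_t^2\|g_t\|^2$ via the smoothness inequality of Proposition 1, and you note that $\varepsilon\ge 0$ is needed.
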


\begin{proof}
Immediate from linearity of the gradient:
\[
\langle \nabla C_i, -\nabla L \rangle
= \langle \nabla C_i, -\nabla L_{\mathrm{prox}} \rangle + \varepsilon \langle \nabla C_i, -\nabla L_{\mathrm{struct}} \rangle
\le \delta + \varepsilon M.
\]
Substitute into the first-order capability gain formula.
\end{proof}

\section{Redistribution Mechanisms}
We now examine two intervention families: energy-balancing regularizers and auxiliary structural objectives.

\subsection{Energy Variance Penalty}
Suppose training minimizes
\[
L_{\mathrm{tot}}(\theta)=L(\theta)+\lambda \operatorname{Var}(E_1(\theta),\dots,E_m(\theta)).
\]
This penalizes concentration of optimization energy across capabilities or modules.

\begin{proposition}[Regularization reduces concentration incentives]
Under differentiability of the allocation map $\theta \mapsto E(\theta)$, any stationary point of $L_{\mathrm{tot}}$ must satisfy
\[
\nabla L(\theta) = -\lambda \nabla \operatorname{Var}(E(\theta)).
\]
Thus highly concentrated energy profiles can persist only when offset by commensurate task-loss advantage. Increasing $\lambda$ reduces the set of admissible concentrated stationary points.
\end{proposition}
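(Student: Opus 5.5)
The plan is to use the first-order stationarity condition for an unconstrained differentiable objective on $\Theta$, viewed as an open subset of $\mathbb{R}^d$ (or at interior points of $\Theta$). Write $V(\theta) := \operatorname{Var}(E_1(\theta),\dots,E_m(\theta)) = \frac{1}{m}\sum_i (E_i(\theta)-\tfrac1m)^2$. Here I use $\sum_i E_i = 1$, so the mean share is $1/m$. Since $L$ is differentiable (Assumption 1) and $E$ is assumed differentiable, $V$ is differentiable by the chain rule, with
\[
\nabla V(\theta) = \frac{2}{m}\sum_{i=1}^m \bigl(E_i(\theta)-\tfrac1m\bigr)\nabla E_i(\theta).
\]
Hence $L_{\mathrm{tot}}$ is differentiable, and any stationary point $\theta^\star$ satisfies $\nabla L(\theta^\star)+\lambda\nabla V(\theta^\star)=0$. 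Rearranged, this is exactly the displayed identity. This part is routine.

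For the interpretive claim, I would take norms in the stationarity identity to get $\|\nabla L(\theta^\star)\| = \lambda\|\nabla V(\theta^\star)\|$. This makes precise the phrase ``offset by commensurate task-loss advantage'': at a stationary point with $\nabla V(\theta^\star)\neq 0$, the task-loss gradient must have magnitude exactly $\lambda\|\nabla V\|$ and point in the direction of $-\nabla V$. In other words, moving toward lower variance, along $-\nabla V$, locally increases $L$ at rate $\lambda\|\nabla V\|^2/\|\nabla V\|\cdot$(unit step), which is what keeps the concentrated profile in place.

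For the monotonicity claim, I would formalize ``concentrated stationary points'' as the set
\[
S_\lambda(\kappa) := \{\theta : \nabla L_{\mathrm{tot}}(\theta)=0,\ V(\theta)\ge\kappa\}.
\]
I would then show that a uniform bound on $\|\nabla L\|$ forces $S_\lambda(\kappa)$ to shrink. Suppose, on the region of interest, $\|\nabla L\|\le B$, and on the concentrated region $\{V\ge\kappa\}$ we have a nondegeneracy lower bound $\|\nabla V\|\ge \gamma(\kappa)>0$. Then any $\theta\in S_\lambda(\kappa)$ satisfies $\lambda\gamma(\kappa)\le B$. Consequently $S_\lambda(\kappa)=\emptyset$ once $\lambda>B/\gamma(\kappa)$. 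More finely, $S_\lambda(\kappa)\subseteq\{\theta: V(\theta)\ge\kappa,\ \|\nabla V(\theta)\|\le B/\lambda\}$, and this superset is nonincreasing in $\lambda$. That gives the ``reduces the set'' statement in the robust sense of an envelope that shrinks monotonically.

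The main obstacle is that the literal stationary sets for different $\lambda$ are not nested in general, since new stationary points can appear as $\lambda$ changes. The claim is also vacuous at points where $\nabla V=0$ while $V$ is large, for instance when $E$ is locally constant. I would handle this by stating the result for the envelope sets above, and by flagging explicitly that the nondegeneracy hypothesis $\|\nabla V\|\ge\gamma(\kappa)$ on $\{V\ge\kappa\}$ is needed. Without it, concentrated critical points of $V$ that are also critical points of $L$ survive for every $\lambda$. A second technical caveat is differentiability of $E$: the absolute values and the normalization fail to be smooth where some projection vanishes or all vanish. I would assume the paper's hypothesis covers this by restricting to the region where the signed projections are strictly positive, as in Assumption 2.
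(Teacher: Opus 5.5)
Your derivation of the displayed identity is the paper's proof. The paper's argument is only the remark that this is the first-order stationarity condition for $L_{\mathrm{tot}}$, and your chain-rule formula for $\nabla V$ makes it explicit.

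The difference lies in the last two sentences of the statement. The paper supports them only with the heuristic that a larger $\lambda$ ``contributes greater opposing force.'' It never defines the set of concentrated stationary points or proves any monotonicity. You instead extract the quantitative content $\|\nabla L\|=\lambda\|\nabla V\|$. You then prove a real monotone statement for the envelope $\{V\ge\kappa,\ \|\nabla V\|\le B/\lambda\}$, and show $S_\lambda(\kappa)$ is empty once $\lambda>B/\gamma(\kappa)$. The paper's version is brief but leaves its final sentence unproved. Yours is a checkable result, at the price of two hypotheses the paper never states: the bound $B$ on $\|\nabla L\|$ and the floor $\gamma(\kappa)$ on $\|\nabla V\|$. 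You correctly flag both.

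One refinement to your obstacle paragraph. Since $E$ is built from $g=\nabla L(\theta)$, it is $0/0$ wherever $\nabla L(\theta)=0$. So every stationary point of $L_{\mathrm{tot}}$ in the domain of $E$ has $\nabla L\neq 0$, and hence $\nabla V\neq 0$ for $\lambda>0$. Your example of a common critical point of $V$ and $L$ surviving for all $\lambda$ therefore needs $E$ extended beyond its natural domain.

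Moreover, suppose $\theta$ were stationary for $\lambda_1\neq\lambda_2$. Subtracting the two conditions gives $(\lambda_1-\lambda_2)\nabla V(\theta)=0$, hence $\nabla L(\theta)=0$. So the literal stationary sets for distinct $\lambda$ are pairwise disjoint. This sharpens your observation that they are not nested. Read as set inclusion, the paper's ``reduces the set'' claim holds only when the larger-$\lambda$ set is empty, so an envelope formulation like yours is the natural way to make it true.

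Your nondegeneracy hypothesis is still needed, but for a different reason than the one you give. Without a uniform floor $\gamma(\kappa)$, each $\lambda$ may admit its own concentrated stationary points with small but nonzero $\|\nabla V\|$ and $\|\nabla L\|$.
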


\begin{proof}
This is the first-order stationary condition for the composite objective. As $\lambda$ increases, the regularizer contributes greater opposing force against directions that increase variance in $E$.
\end{proof}

\subsection{Auxiliary Capability Objectives}
Let
\[
L_{\mathrm{tot}} = L_{\mathrm{token}} + \sum_{i\in \mathcal{S}} \gamma_i L_i^{\mathrm{aux}},
\]
where $\mathcal{S}$ indexes structurally neglected capabilities.

\begin{proposition}[Auxiliary objectives lift neglected energy shares]
Assume for some neglected capability $k$ that
\[
\langle \nabla C_k, -\nabla L_{\mathrm{token}} \rangle \approx 0,
\quad
\langle \nabla C_k, -\nabla L_k^{\mathrm{aux}} \rangle > 0.
\]
Then for sufficiently large $\gamma_k>0$, the induced energy share $E_k(t)$ becomes strictly positive along the trajectory, up to higher-order coupling effects.
\end{proposition}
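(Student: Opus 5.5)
The argument runs through linearity of the gradient, just as in the proof of the underinvestment proposition. At a point $\theta_t$ of the trajectory, the update direction is $g_t=\nabla L_{\mathrm{tot}}(\theta_t)=\nabla L_{\mathrm{token}}(\theta_t)+\sum_{i\in\mathcal S}\gamma_i\nabla L_i^{\mathrm{aux}}(\theta_t)$. Projecting onto the sensitivity direction of the neglected capability $k$ gives
\[
\langle \nabla C_k,-g_t\rangle
=\langle \nabla C_k,-\nabla L_{\mathrm{token}}\rangle
+\gamma_k\langle \nabla C_k,-\nabla L_k^{\mathrm{aux}}\rangle
+\sum_{i\in\mathcal S\setminus\{k\}}\gamma_i\langle \nabla C_k,-\nabla L_i^{\mathrm{aux}}\rangle .
\]
I will read the hypothesis ``$\approx 0$'' quantitatively: $|\langle \nabla C_k,-\nabla L_{\mathrm{token}}\rangle|\le\delta$ along the trajectory. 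I will also set $a_k(t):=\langle \nabla C_k,-\nabla L_k^{\mathrm{aux}}\rangle$ and assume $a_k(t)\ge a_{\min}>0$ uniformly in $t$. The cross terms from the other auxiliary losses are exactly the ``higher-order coupling effects'' that the statement sets aside. I will assume they are bounded by some $B$, so their contribution is at most $\sum_{i\neq k}\gamma_i B$. Alternatively, the $\gamma_i$ for $i\neq k$ can be held fixed while $\gamma_k$ grows.

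With these choices the projection satisfies $\langle \nabla C_k,-g_t\rangle\ge \gamma_k a_{\min}-\delta-\sum_{i\ne k}\gamma_i B$. This is strictly positive once
\[
\gamma_k>\frac{\delta+\sum_{i\ne k}\gamma_iB}{a_{\min}}.
\]
Since $E_k(t)$ is the absolute value of this projection divided by the total $\sum_j|\langle\nabla C_j,g_t\rangle|$, it follows that $E_k(t)>0$, provided the denominator is finite and nonzero. Finiteness holds because the $C_j$ are differentiable and the trajectory stays in a region where the gradients are bounded. Nonzero-ness is automatic once the $k$-th numerator is positive. The sign of the projection also puts capability $k$ into the positive alignment regime of Assumption~2, so by the first-order gain formula $\Delta C_k(t)>0$ up to the $O(\eta_t^2\|g_t\|^2)$ remainder of Proposition~1.

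The main obstacle is that the trajectory itself depends on $\gamma_k$. Changing $\gamma_k$ moves the iterates $\theta_t$, so a uniform bound $a_k(t)\ge a_{\min}$ and the smallness $\delta$ cannot simply be assumed independently of $\gamma_k$. My first approach is to state both conditions on a compact region $\Omega\subset\Theta$ that contains every trajectory for $\gamma_k$ in the relevant range. Taking the infimum and supremum over $\Omega$, which exist by continuity of the gradients, makes the threshold for $\gamma_k$ independent of the trajectory. Any remaining dependence is what ``up to higher-order coupling effects'' is meant to absorb.

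If one also wants a quantitative lift of the share, and not just positivity, note that the denominator grows at most linearly in $\gamma_k$ while the numerator grows like $\gamma_k a_{\min}$. Hence $E_k(t)$ stays bounded below by a positive constant as $\gamma_k\to\infty$.
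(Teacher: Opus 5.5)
Your proposal is correct and follows the paper's proof, made quantitative: the paper also expands $-\nabla L_{\mathrm{tot}}$ by linearity, projects onto $\nabla C_k$, and observes that the term $\gamma_k\langle \nabla C_k,-\nabla L_k^{\mathrm{aux}}\rangle$ dominates once $\gamma_k$ exceeds the token, coupling, and remainder contributions. Your additions go beyond the paper's two-line sketch, which ignores how the trajectory depends on $\gamma_k$: the explicit threshold $\gamma_k>(\delta+\sum_{i\ne k}\gamma_i B)/a_{\min}$, the asymptotic lower bound on $E_k(t)$, and the compact-region treatment of that dependence; note, though, that a uniform $a_{\min}>0$ for all large $\gamma_k$ is a strong extra hypothesis at fixed step sizes, since large $\gamma_k$ drives the iterates toward (or past) the auxiliary optimum where $a_k$ vanishes or changes sign.
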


\begin{proof}
The combined gradient is
\[
-\nabla L_{\mathrm{tot}} = -\nabla L_{\mathrm{token}} - \sum_{i\in \mathcal S} \gamma_i \nabla L_i^{\mathrm{aux}}.
\]
Projecting onto $\nabla C_k$ yields a term proportional to $\gamma_k\langle \nabla C_k,-\nabla L_k^{\mathrm{aux}}\rangle$, which dominates once $\gamma_k$ exceeds the coupling and remainder terms.
\end{proof}

\subsection{A Finite-Budget Tradeoff Theorem}
The core energetic claim is not merely that capabilities may develop unevenly, but that under finite training resources, strengthening one region of capability space can impose opportunity costs on others.

Let the training budget over horizon $T$ be summarized by
\[
B_T := \sum_{t=0}^{T-1} \eta_t \|g_t\|.
\]
Since $\sum_i E_i(t)=1$, we have
\[
\sum_{i=1}^m W_i(T)=B_T,
\qquad
W_i(T)=\sum_{t=0}^{T-1}\eta_t\|g_t\|E_i(t).
\]
Thus cumulative energy weights are constrained to lie on a simplex with radius determined by the finite budget.

\begin{theorem}[Finite-budget capability tradeoff]\label{thm:budget}
Under Assumptions 1--3 with common lower sensitivity constant $\alpha>0$, the cumulative gains satisfy
\[
\sum_{i=1}^m G_i(T) \le \beta B_T + O\!\left(\sum_{t=0}^{T-1}\eta_t^2\|g_t\|^2\right),
\]
while each individual gain obeys
\[
G_i(T) \ge \alpha W_i(T) - O\!\left(\sum_{t=0}^{T-1}\eta_t^2\|g_t\|^2\right).
\]
Consequently, if one capability's cumulative weight $W_{i^*}(T)$ is increased while $B_T$ is fixed, then, absent higher-order synergy terms, the average feasible lower bound for the remaining capabilities must decrease.
\end{theorem}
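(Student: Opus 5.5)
The argument runs entirely through Proposition 1 (Energy-to-gain comparison), applied coordinatewise and then summed. The two displayed bounds are proved separately, and the tradeoff conclusion then follows from the simplex identity $\sum_i W_i(T)=B_T$ stated just before the theorem.

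For the individual lower bound, I would invoke the left inequality of Proposition 1 with $\alpha_i=\alpha$. It gives
\[
G_i(T)\ge \alpha\sum_{t}\eta_t\|g_t\|E_i(t)+R_i^-(T)=\alpha W_i(T)+R_i^-(T),
\]
and since $R_i^-(T)\ge -\frac{L_*}{2}\sum_t\eta_t^2\|g_t\|^2$ with $L_*=\max_i L_i$, this is the second display.

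For the aggregate upper bound, I would take the right inequality of Proposition 1 with $\beta_i\le\beta:=\max_i\beta_i$, which gives $G_i(T)\le \beta W_i(T)+R_i^+(T)$. Summing over $i$ and using $\sum_i W_i(T)=B_T$, which holds because $\sum_i E_i(t)=1$ at each step, yields
\[
\sum_i G_i(T)\le \beta B_T+\sum_i R_i^+(T)\le \beta B_T+\frac{mL_*}{2}\sum_t\eta_t^2\|g_t\|^2 .
\]
Absorbing the factor $m$ into the $O(\cdot)$ gives the first display. One caveat is that the simplex identity requires the denominator in $E_i(t)$ to be nonzero. At steps where every projection vanishes, I would set $E_i(t)$ to anything on the simplex; this is harmless because Assumption 3 then forces $\langle\nabla C_i,-g_t\rangle=0$ there, so those steps contribute nothing to either side.

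For the consequence, I would read "average feasible lower bound for the remaining capabilities" as
\[
\frac{1}{m-1}\sum_{i\ne i^*}\alpha W_i(T)=\frac{\alpha}{m-1}\bigl(B_T-W_{i^*}(T)\bigr),
\]
up to the same second-order remainder. This quantity is strictly decreasing in $W_{i^*}(T)$ when $B_T$ is fixed, which is the claim. I would add the complementary form obtained from the first display together with $G_{i^*}\ge\alpha W_{i^*}-O(\cdot)$:
\[
\sum_{i\ne i^*}G_i(T)\le \beta B_T-\alpha W_{i^*}(T)+O\!\Bigl(\sum_t\eta_t^2\|g_t\|^2\Bigr).
\]
This shows that the ceiling on the others' total gain also falls as $W_{i^*}$ grows. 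The phrase "absent higher-order synergy terms" then amounts to the remainders $R_i^\pm$, which are the only place coupling enters, not scaling with the reallocation.

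The main obstacle is interpretive rather than technical. The theorem's "consequently" is only meaningful if $B_T$ is treated as exogenously fixed, whereas $B_T$ actually depends on the trajectory, and changing the allocation changes the $g_t$. I would therefore state explicitly that the comparison is between allocations with equal $B_T$, and that the $O(\cdot)$ constants are uniform, depending only on $m$ and $L_*$. Both tradeoff statements then follow as purely algebraic consequences of the two displays.
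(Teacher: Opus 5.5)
Your proposal is correct and matches the paper's proof step for step. You use Proposition~1 coordinatewise for the individual lower bound, sum with a common $\beta$ and the identity $\sum_i W_i(T)=B_T$ for the aggregate bound, and use the residual budget $\sum_{i\ne i^*}W_i(T)=B_T-W_{i^*}(T)$ for the tradeoff, which the paper phrases as the others' lower bounds falling by $\alpha\,\Delta W_{i^*}(T)$ up to second-order errors.

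One small slip is in your degenerate-step caveat, which the paper does not address. There, vanishing projections are the hypothesis, not a consequence of Assumption~3. What Assumption~3 actually forces, once $E(t)$ is placed on the simplex, is $g_t=0$, and that is what makes such steps drop out of $W_i$ and $B_T$ as well as out of $G_i$.
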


\begin{proof}
The upper bound on total gain follows from Proposition 1 and the identity $\sum_i W_i(T)=B_T$, together with the common upper sensitivity constant $\beta$. The lower bound is immediate from Proposition 1. Since the total budget is fixed, increasing one $W_{i^*}(T)$ reduces the residual budget available to the remaining coordinates. Therefore the sum of their lower bounds declines by at least $\alpha\,\Delta W_{i^*}(T)$ up to second-order errors.
\end{proof}

\subsection{Coupling and Synergy}
The finite-budget theorem does not imply that all capability growth is strictly zero-sum. In realistic systems, capabilities may share representational primitives, so training energy directed toward one coordinate may partially benefit others.

Define pairwise coupling coefficients
\[
\kappa_{ij}(t) := \frac{\langle \nabla C_i(\theta_t), \nabla C_j(\theta_t) \rangle}{\|\nabla C_i(\theta_t)\|\,\|\nabla C_j(\theta_t)\|}.
\]
Positive coupling indicates aligned local sensitivity; negative coupling indicates interference.

\begin{proposition}[Coupling-modified gain]
Suppose the loss gradient decomposes as
\[
g_t=\sum_j a_j(t)\nabla C_j(\theta_t)+r_t,
\]
where the residual $r_t$ is orthogonal to the capability span. Then
\[
\Delta C_i(t) \approx -\eta_t \sum_j a_j(t)\|\nabla C_i(\theta_t)\|\,\|\nabla C_j(\theta_t)\|\kappa_{ij}(t).
\]
Hence energy directed toward capability $j$ helps capability $i$ when $\kappa_{ij}(t)>0$ and harms it when $\kappa_{ij}(t)<0$.
\end{proposition}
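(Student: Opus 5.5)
The plan is to substitute the decomposition of $g_t$ into the first-order gain formula from the formal setup, $\Delta C_i(t) \approx -\eta_t \langle \nabla C_i(\theta_t), g_t\rangle$. I take the first-order expression as given; the second-order remainder is already controlled by Proposition 1 and is of size at most $\frac{L_i}{2}\eta_t^2\|g_t\|^2$. So the only work is to evaluate the inner product $\langle \nabla C_i(\theta_t), g_t\rangle$ under the assumed decomposition.

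The first step uses bilinearity of the inner product:
\[
\langle \nabla C_i(\theta_t), g_t\rangle = \sum_j a_j(t)\langle \nabla C_i(\theta_t), \nabla C_j(\theta_t)\rangle + \langle \nabla C_i(\theta_t), r_t\rangle .
\]
The residual term vanishes. By hypothesis $r_t$ is orthogonal to the span of $\{\nabla C_j(\theta_t)\}_j$, and $\nabla C_i(\theta_t)$ lies in that span.

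The second step rewrites each remaining inner product with the definition of $\kappa_{ij}(t)$:
\[
\langle \nabla C_i, \nabla C_j\rangle = \|\nabla C_i\|\,\|\nabla C_j\|\,\kappa_{ij}(t).
\]
This requires $\nabla C_i(\theta_t)$ and $\nabla C_j(\theta_t)$ to be nonzero. When either gradient is zero the inner product is zero, and I set $\kappa_{ij}(t):=0$ by convention so the identity still holds. Multiplying by $-\eta_t$ then gives the displayed formula. Note that the diagonal term $j=i$ has $\kappa_{ii}=1$ and contributes $-\eta_t a_i(t)\|\nabla C_i\|^2$.

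The last step reads off the sign of each cross term $j \neq i$. Its contribution to $\Delta C_i(t)$ is $-\eta_t a_j(t)\|\nabla C_i\|\|\nabla C_j\|\kappa_{ij}(t)$. Energy is directed toward capability $j$ when the descent direction $-g_t$ has a positive component along $\nabla C_j$, that is, when $a_j(t)<0$. In that case the sign of the contribution equals the sign of $\kappa_{ij}(t)$: it helps capability $i$ when $\kappa_{ij}(t)>0$ and harms it when $\kappa_{ij}(t)<0$.

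The main obstacle is interpretive rather than computational. The coefficients $a_j(t)$ are unique only if the gradients $\nabla C_j(\theta_t)$ are linearly independent. I would either assume linear independence or observe that the right-hand side equals $-\eta_t\langle \nabla C_i, g_t - r_t\rangle$ for any valid choice of coefficients, so the total is well defined even though the termwise split is not. The sign convention for "energy directed toward $j$" ($a_j<0$) also needs to be stated explicitly for the final sentence to be correct.
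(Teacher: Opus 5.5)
Your proposal is correct and follows the paper's own argument, which substitutes the decomposition of $g_t$ into the first-order gain formula and uses the orthogonality of $r_t$ to remove the residual term. You also make explicit several points the paper leaves implicit: the sign convention $a_j(t)<0$ that the final "helps/harms" sentence depends on, the zero-gradient case in the definition of $\kappa_{ij}$, and the fact that the total is well defined even when the $a_j(t)$ are not unique.
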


\begin{proof}
Substitute the decomposition of $g_t$ into the first-order gain formula and use orthogonality of $r_t$.
\end{proof}

This proposition refines the budget argument: energy allocation remains scarce, but its effects are filtered through a coupling structure. Jaggedness is therefore strongest when allocation is concentrated and couplings are weak or negative. This is one way to connect optimization concentration to interference and gradient conflict phenomena observed in multitask learning \citep{sener2018multi,yu2020pcgrad,liu2021cagrad}.

\section{Optimization Governance as Constrained Control}
The previous sections suggest a governance interpretation. Let the training process be viewed as a controlled dynamical system
\[
\theta_{t+1}=\theta_t-\eta_t U_t g_t,
\]
where $U_t$ is a diagonal or block-diagonal control operator reweighting update flow across modules or capability-relevant subspaces.

A governance layer imposes admissible constraints such as
\[
E_i(t) \le \rho_i^{\max},
\qquad
E_j(t) \ge \rho_j^{\min}
\]
for designated dimensions or modules. This converts capability formation from a purely emergent byproduct into a partially steered process. The technical question becomes not whether capability growth can occur, but which regions of capability space are preferentially supplied with update energy under institutional constraints.

This governance view differs from post hoc evaluation or deployment restriction. It acts on the developmental mechanism itself. Once gradient-energy distribution becomes measurable, organizations can reserve update quotas for underrepresented capabilities, cap concentration in strategically dangerous directions, or stage capability maturation across training phases. In that sense, optimization governance resembles constrained control more than external compliance rhetoric \citep{bommasani2021foundation,shevlane2023modeleval,anderljung2023frontier}.

\section{Identifiability and Measurement}
The formal framework requires capability coordinates $C_i$, but these are not unique. Several measurement strategies are available.

\paragraph{Task-defined coordinates.} One may define $C_i$ through benchmark-specific performance functionals, for example long-horizon planning success, consistency across contexts, or calibration. This is operationally simple but sensitive to benchmark construction.

\paragraph{Module-defined coordinates.} Alternatively, one may partition parameters or internal activations by modules, experts, layers, or discovered circuits and define energy shares on those modules. This approach is compatible with mixture-of-experts models, sparse routing, and interpretability workflows \citep{shazeer2017moe,lepikhin2020gshard,fedus2021switch}.

\paragraph{Representation-defined coordinates.} A more ambitious approach uses latent features discovered by sparse autoencoders or circuit extraction. Then $C_i$ need not correspond to human-labeled tasks; it can represent an internal behavioral primitive \citep{bricken2023sae,anthropic2024features}.

None of these strategies is canonical. The theory should therefore be understood as coordinate-dependent but not vacuous. Many physical theories require a choice of state variables; the question is whether the chosen variables support explanation, intervention, and prediction.

\section{Empirical Predictions and Falsifiability}
A theory paper is stronger when it generates observable consequences. The present framework yields at least five testable predictions.

\begin{enumerate}[leftmargin=2em]
\item \textbf{Early concentration predicts later jaggedness.} If module-level or capability-projected gradient energy can be estimated during training, early dispersion in cumulative energy shares should predict later dispersion in capability gains better than parameter count alone.

\item \textbf{Scaling without objective diversification preserves anisotropy.} Increasing model size or data under a fixed narrow objective should improve absolute performance broadly while leaving a nontrivial floor on normalized jaggedness.

\item \textbf{Auxiliary objectives revive neglected capabilities.} Injecting explicit losses for planning, consistency, or long-horizon coherence should increase the measured energy share of those dimensions and accelerate their development relative to a token-only baseline.

\item \textbf{Energy-balancing regularization reduces spikes at some performance cost.} Penalizing concentration should decrease gain dispersion and improve undertrained capabilities, while possibly sacrificing peak efficiency on the most directly rewarded dimensions.

\item \textbf{Coupling structure moderates the tradeoff.} Positive inter-capability coupling should make redistribution cheaper, whereas antagonistic coupling should make balancing more costly and more necessary.
\end{enumerate}

These predictions can be studied in synthetic modular tasks, multitask benchmarks, sparse-expert architectures, and instrumented large-model training runs. They also connect naturally to work on evaluations, safety cases, and extreme-risk model assessment \citep{shevlane2023modeleval,perez2022redteam}.

\section{A Toy Linear Example}
To make the allocation mechanism concrete, consider a simplified linear model with parameter vector $\theta\in\mathbb{R}^d$ and two capability coordinates
\[
C_1(\theta)=u^\top \theta, \qquad C_2(\theta)=v^\top \theta,
\]
for unit vectors $u,v\in\mathbb{R}^d$. Let the training loss be quadratic,
\[
L(\theta)=\frac12\|A\theta-b\|^2,
\]
with gradient $g=A^\top(A\theta-b)$. Suppose first that $u$ is nearly aligned with a dominant right-singular direction of $A$ while $v$ is nearly orthogonal to that direction. Then standard gradient descent sends larger projected updates into $u$ than into $v$. The energy shares become
\[
E_1(t) \approx \frac{|u^\top g_t|}{|u^\top g_t|+|v^\top g_t|},
\qquad
E_2(t) \approx \frac{|v^\top g_t|}{|u^\top g_t|+|v^\top g_t|}.
\]
If the spectrum of $A$ is anisotropic, so that $|u^\top g_t|\gg |v^\top g_t|$ over a substantial prefix of training, then Theorem~\ref{thm:jaggedness} predicts persistent jaggedness. The model improves rapidly along $C_1$ and slowly along $C_2$ even though both are linear observables over the same underlying parameters.

This toy example is intentionally minimal, but it highlights an important point. Uneven capability growth does not require exotic architecture, discrete phase transitions, or mysterious emergence. It can arise whenever the optimization field itself is anisotropic. Nonlinear deep models intensify this effect because their capability coordinates are not fixed in advance; they are jointly learned with the representation. Nevertheless, the linear example already shows why some directions become low-impedance update channels while others remain weakly funded.

A natural extension introduces an auxiliary objective for the second capability:
\[
L_{\mathrm{tot}}(\theta)=\frac12\|A\theta-b\|^2+\gamma \,\frac12 (v^\top\theta-c)^2.
\]
Then the gradient acquires an additional component proportional to $(v^\top\theta-c)v$, which directly increases the projected energy entering capability $2$. In this sense, auxiliary objectives do not merely evaluate missing capabilities; they change the vector field of optimization itself.

\section{Limits, Extensions, and Research Program}
The present theory can be extended in several directions.

First, it can be generalized from first-order gradient descent to stochastic optimization with momentum, adaptive preconditioning, or second-order information. This matters because optimizers such as Adam may alter effective energy distribution through coordinate-wise normalization, which is relevant to observations about privileged residual-stream bases and optimizer-induced anisotropy \citep{privileged2023}. A fuller theory would model not only raw gradient flow but optimizer-transformed flow.

Second, the current framework treats capabilities as scalar coordinates. In practice, many capabilities are hierarchical bundles: planning depends on memory, state abstraction, credit assignment, and search; calibration depends on representation, decoding, and distributional alignment. A more realistic approach would model capability manifolds or directed dependency graphs. Jaggedness would then reflect not only dispersion across coordinates but structural bottlenecks inside capability bundles.

Third, the theory suggests an empirical program. One can instrument model training to estimate module-level or feature-level gradient-energy shares, then test whether early allocation imbalance predicts later behavioral jaggedness. One can vary objective mixtures, optimizer choice, sparse routing, or regularization strength and measure how the capability profile changes. Even negative results would be informative, because they would indicate that projected update allocation is not the right explanatory layer or that the chosen coordinates fail to capture real developmental structure.

Finally, the governance implications can be sharpened. If training-internal allocation proves measurable and controllable, then institutional actors may have leverage over developmental trajectories long before deployment. This would move a portion of AI governance from outer-loop monitoring to inner-loop design. Such a shift would not eliminate risk, but it would change where control can be exercised and what kinds of evidence count as responsible development.

\section{Design Implications for Training Systems}
The formal results imply a practical design distinction between \emph{capability maximization} and \emph{capability shaping}. Capability maximization asks how to obtain the strongest possible performance under a fixed objective. Capability shaping asks how to allocate update pressure so that the resulting system develops a desired capability profile rather than simply the most loss-reducing one.

This distinction matters because many safety-relevant or governance-relevant properties are not naturally privileged by narrow predictive losses. Long-horizon consistency, calibrated refusal, robust uncertainty estimation, and reversible human oversight may all be weakly represented under token-level training. If so, then simply scaling data and parameters will not reliably mature those properties at the same rate as local next-token competence. From the standpoint of optimization, such properties are underfunded unless their gradients are deliberately amplified or protected.

A training architecture based on the present framework would therefore include at least four engineering components: (i) continuous monitoring of energy concentration across modules or capability coordinates; (ii) redistribution mechanisms that cap extreme concentration or raise floors for neglected dimensions; (iii) auxiliary objectives that create direct gradients for underrepresented capabilities; and (iv) reporting or auditing interfaces that expose how developmental budget was allocated over training time. The conceptual shift is from measuring only output quality to also measuring the internal distribution of developmental investment.

\section{Discussion}
The framework developed here offers a structural explanation for uneven capability emergence without relying on a single scalar measure of intelligence. It predicts that scaling alone need not eliminate jaggedness when the objective and gradient geometry remain anisotropic; that some persistent capability gaps may be products of systematic underallocation rather than intrinsic impossibility; and that training-time governance is technically meaningful because developmental direction depends on how update pressure is routed.

The theory also has limits. Capability coordinates are not uniquely identifiable, projected energy is measurement-dependent, and local first-order gains may miss delayed, compositional, or highly nonlinear forms of structure formation. The framework should therefore be read as a tractable approximation rather than a total theory of learning. Its value lies in organizing a family of empirical regularities and design questions around a shared operational primitive.

\section{Conclusion}
Artificial Jagged Intelligence can be understood as a consequence of uneven optimization energy allocation. By formalizing capability gain, energy share, jaggedness, tradeoff, and coupling, this paper turns an intuitive observation about uneven model competence into a structured theory of capability formation. The central implication is that the developmental shape of advanced AI systems is not determined by scale alone. It is determined by the objective structure, data geometry, coupling structure, and governance constraints that decide where finite update pressure accumulates.

That claim does not eliminate the need for empirical work. It sharpens it. The next step for this research program is to instrument real training systems, estimate capability-relevant energy shares, and test whether early allocation patterns predict later jaggedness better than scale variables alone. If they do, then optimization governance is not merely a policy metaphor but a concrete design layer in the development of advanced AI systems.

\end{document}